\theoremstyle{plain}
\newtheorem{thm}{Theorem}
\newtheorem{lem}[thm]{Lemma}
\newtheorem{cor}[thm]{Corollary}
\theoremstyle{definition}
\theoremstyle{remark}
\numberwithin{equation}{section}
\newcommand{\ind}{\mathbbm{1}}
\newcommand{\pr}{\mathbb{P}}
\newcommand{\R}{\mathbb{R}}
\newcommand{\E}{\mathbb{E}}
\newcommand{\ceil}[1]{\left\lceil #1 \right\rceil}
\newcommand{\Var}{\mathrm{Var}}
\newcommand{\indist}{\stackrel{d}{\to}}
\newcommand{\inprob}{\stackrel{p}{\to}}
\newcommand{\pa}{\mathrm{pa}} 
\newcommand{\ch}{\mathrm{ch}} 
\newcommand{\an}{\mathrm{an}} 
\newcommand{\de}{\mathrm{de}} 
\newcommand{\depth}{\mbox{depth}}
\title{Random Intersection Trees}
\author{Rajen Dinesh Shah and Nicolai Meinshausen\\
Statistical Laboratory, University of Cambridge \\
Department of Statistics, University of Oxford\\
r.shah@statslab.cam.ac.uk and meinshausen@stats.ox.ac.uk}
\begin{document}
\maketitle
\begin{abstract}
Finding interactions between variables in large and high-dimensional datasets is often a serious computational challenge. Most approaches build up interaction sets incrementally, adding variables in a greedy fashion. The drawback is that potentially informative high-order interactions may be overlooked.
Here, we propose at an alternative approach for classification problems with binary predictor variables, called \emph{Random Intersection Trees}.  It works by starting with a maximal interaction that includes all variables, and then gradually removing variables if they fail to appear in randomly chosen observations of a class of interest.
We show that informative interactions are retained with high probability, and the computational complexity of our procedure is of order $p^\kappa$ for a value of $\kappa$ that can reach values as low as 1 for very sparse data; in many more general settings, it will still beat the exponent $s$ obtained when using a brute force search constrained to order $s$ interactions.
In addition, by using some new ideas based on min-wise hash schemes, we are able to further reduce the computational cost.
Interactions found by our algorithm can be used for predictive modelling in various forms, but they are also often of interest in their own right as useful characterisations of 
what distinguishes a certain class from others.
\bigskip
  
\noindent {Key words: High-dimensional classification, Interactions, Min-wise hashing, Sparse data.}
\end{abstract}

\section{Introduction} \label{section:intro}
In this paper, we consider classification with high-dimensional binary predictors. We suppose we have data that can be written in the form $(Y_i, X_i)$ for observations $i =1, \ldots, n$; $Y_i$ is the class label and $X_i \subseteq \{1,\ldots, p\}$ is the set of active predictors for observations $i$ (out of a total of $p$ predictors). An important example of this type of problem is that of text classification, where then $X_i$ is the set of frequently appearing words (in a suitable sense) for document $i$, and $Y_i$ indicates whether the document belongs to a certain class. In this case, the dimension $p$ can be of the order of several thousand or more. More generally, if data with continuous predictors are available, they can be converted to binary format by choosing various split-points, and then reporting whether or not each variable exceeds each of these thresholds.

Our aim here is to develop methodology that can discover important interaction terms in the data without requiring that any of their lower order interactions are also informative. More precisely, we are interested in finding subsets $S\subset\{1,\ldots,p\}$ of all predictor variables that occur more often for observations in a class of interest than for other observations. We will use the terms ``leaf nodes'', ``rules'', ``patterns'' and ``interactions'' interchangeably to describe such subsets $S$.
For simplicity, suppose there are only two classes, the set of labels being $\{0, 1\}$. The case with more than two classes can be dealt with using one-versus-one, or one-versus-all strategies. Given a pair of thresholds, $0 \leq \theta_0 < \theta_1 \leq 1$, our goal is to find all sets $S$ (or as many as possible), for which 
\begin{equation} \label{eq:opt}
\pr_n(S\subseteq X|Y=1) \ge \theta_1 \quad \mbox{and} \quad \pr_n(S\subseteq X|Y=0)\le \theta_0 .
\end{equation}
Here and throughout the paper, we use the subscript $n$ to indicate that the probabilities are empirical probabilities. For example, for $c \in \{0, 1\}$,
\[
 \pr_n(S\subseteq X|Y=c) := \frac{1}{|C_c|} \sum_{i \in C_c} \ind_{\{S \subseteq X_i\}} ,
\]
where we have denoted the set of observations in class $c$ by $C_c$. Of course, one would also be interested in sets $S$ which satisfy a version of \eqref{eq:opt} with classes 1 and 0 interchanged, but we will only consider \eqref{eq:opt} for simplicity.

The interaction terms uncovered can be used in various ways. For example, they can be built into tree-based methods, or form new features in linear or logistic regression models.  The interactions may also be of interest in their own right, as they can characterise distinctions between classes in a simple and interpretable way. These potentially high-order interactions that our method aims to target would be very difficult to discover using existing methods, as we now explain.

A pure brute force search examines each potential interaction $S$ of a given size to check whether it fulfills \eqref{eq:opt}. Restricting the order of interactions to size $s$, the computational complexity scales as $p^s$, rendering problems with even moderate values of $p$ infeasible.

Instead of searching through every possible interaction, tree-based methods build up interactions incrementally. A typical tree classifier such as \emph{CART} \citep{CART} works by building a decision tree greedily from root node to the leaves; see 
also \citet{loh1997split}. The feature space is recursively partitioned based on the variable whose presence or absence best distinguishes the classes. The myopic nature of this strategy makes it a computationally feasible approach, even for very large problems. The downside is that it produces rather unstable results and hence gives poor predictive performance. Moreover, because of the incremental way in which interactions are constructed, the success of this strategy in recovering an important interaction $S$ rests on at least some of its lower order interactions being informative for distinguishing the classes.

Approaches based on tree ensembles can somewhat alleviate the problem of tree instability; \emph{Random Forests} \citep{breiman01random} is a prominent example. Here the data with which the decision trees are constructed is sampled with replacement from the original data. Further randomness is introduced by randomising over the subset of variables considered for each split in the construction of the trees. While the results of \emph{Random Forests} are very complex and hard to interpret, one can examine what are known as variable importance measures. These aim to quantify the marginal or pairwise importance of predictor variables \citep{strobl2008conditional}. Though such measures can be useful, they may fail to highlight important high-order interactions between variables.

More recently, there has been interest in algorithms that start from deep splits or leaf nodes in trees and then try to build a simpler model out of many thousands of these leaves by regularisation and dimension reduction. Examples include \emph{Rule Ensembles} \citep{friedman2005plv}, \emph{Node Harvest} 
\citep{meinshausen2010node} 
and the general framework of \emph{Decision Lists} \citep{marchand2006learning,rivest1987learning}. Though these methods have been demonstrated to improve on \emph{Random Forests} in some situations, they nevertheless crucially rely on a good initial basis of leaf nodes. These bases are usually generated by tree ensemble methods and so, if the base trees miss some important splits, they would also be absent in the results of these derivative algorithms.

A complementary approach has developed in data mining under the name of frequent itemset search, starting with the \emph{Apriori} algorithm \citep{agrawal1994fast}, which has since then developed into many improved and more specialised forms. The starting point for these was ``market basket analysis'', where the shopping behaviour of customers is analysed and the goal is to identify baskets that are often bought together.
While generally very successful, these methods work on the principle that subsets of frequent itemsets are also frequent. Thus if there is no strong marginal effect of any of the variables that is involved in a decision rule, there is no advantage in general compared to a brute force search.

We now give a simple example where tree-based approaches and those based on the \emph{Apriori} algorithm will struggle. Suppose our data are independent realisations of the pair of random variables $(X, Y)$, whose distribution is given as follows. Let $Z$ be the random binary vector
 \[
  Z = (\ind_{\{1 \in X\}}, \ldots, \ind_{\{p \in X\}})^T.
 \]
Suppose that, conditional on $\{Y=0\}$, $Z_k$ for $k=1,\ldots,p$ are independent; and conditional on $\{Y=1\}$, $Z_k$ for $k=2,\ldots,p$ are independent and $Z_1 = Z_2$. We take the marginal distributions of the $Z_k$ to all be Bernoulli($q_Z$). Finally, let $Y$ be independent with a Bernoulli($q_Y$) distribution. Then the interaction $S=\{1,2\}$ is certainly important for distinguishing the classes. However, any individual variable in $\{1,\ldots,p\}$ does not appear more or less frequently on average among class~1 compared to class~0. This lack of any marginal relationship between the class label and the first two predictors would cause tree-based methods to perform poorly. In addition, using the \emph{Apriori} algorithm or the brute force method to find $S$ would require computational cost of the order $p^2$.

This paper looks at a new way to discover interactions, which we call \emph{Random Intersection Trees}. Rather than searching through potential interactions directly, our method works by looking for collections of observations whose common active variables together form informative interactions. We present a basic version of the \emph{Random Intersection Trees} algorithm in the following section. This approach allows for computationally feasible discovery of interactions in settings where most existing procedures would perform poorly. Bounds on the complexity of our algorithm are given in Section~\ref{section:computational}.
For example, our results yield that in the scenario discussed in the previous paragraph, the order of computational complexity of our method is at most $o(p^\kappa)$ for any $\kappa >1$. In Section~\ref{section:stopping}, we propose some modifications of our basic method to reduce its computational cost, based on min-wise hash schemes. Some numerical examples are given in Section~\ref{section:numerical}. We conclude with a brief discussion in Section~\ref{section:discussion}, and all technical proofs are collected in the appendix.

\section{Random Intersection Trees} \label{section:random}
Our method searches for important interactions by looking at intersections of randomly chosen observations from class~1. We start with the full set of variables, and then remove those that are not present in the observations chosen. If a pattern $S$ has high prevalence in class~1, i.e.~$\pr_n (X=S |Y=1)$ is large, it will be included in the observations chosen with high probability. Thus, provided the overall process is repeated often enough, $S$ is likely to be retained in at least one of the final intersections. One could then consider each of these intersections as possible solutions of \eqref{eq:opt}, checking whether their prevalence among class 0 is below $\theta_0$.

Arranging the procedure in a tree-type search makes the algorithm more computationally efficient. To describe the details, we first define some terms associated with trees that will be needed later. Recall that a tree is a pair $(N, E)$ of nodes and edges forming a connected acyclic (undirected) graph. We will always assume (with no loss of generality) that $N = \{1, \ldots, |N|\}$.
A \emph{rooted tree} is the directed acyclic graph obtained from a tree by designating one node as root and directing all edges away from this root.

Let $\alpha$ and $\beta$ be two nodes in a rooted tree, with $\beta$ not the root node. If $(\alpha, \beta) \in E$, $\beta$ is said to be the \emph{child} of $\alpha$, and  $\alpha$, the \emph{parent} of $\beta$. We will denote by $\ch(\alpha)$, the set of children of a node $\alpha$. Since we are only considering rooted trees here as opposed to general directed graphs, we will differ with convention slightly and will use $\pa(\beta)$ to mean the unique parent of $\beta$. Thus here, $\pa(\beta)$ is a node itself, whereas $\ch(\alpha)$ is a set of nodes.

If $\alpha \neq \beta$ lies on the unique path from the root to $\beta$, we say $\alpha$ is an \emph{ancestor} of $\beta$, and $\beta$ is a \emph{descendant} of $\alpha$. The \emph{depth} of $\alpha$, denoted $\depth(\alpha)$, is the number of ancestors of $\alpha$: $\depth(\alpha) = |\an(\alpha)|$. In particular, the depth of the root node is $0$. The \emph{depth} (also known as the \emph{height}) of a rooted tree is the length of the longest path, or equivalently, the greatest number of ancestors of any particular node. By \emph{level} $d$ of the tree, we will mean the set of nodes with depth $d$.

We will say an indexing of the nodes is \emph{chronological} if, for every parent and child pair, larger indices are assigned to the child than the parent. In particular, the root node will be $1$. Note that both depth-first and breadth-first indexing methods are chronological in this way.

\begin{algorithm}
\caption{A basic version of Random Intersection Trees} \label{alg:1}
 \begin{algorithmic}
  \FOR{tree $m=1$ \TO $M$}
    \STATE Construct rooted tree $m$ of depth $D$. Each node in levels $0,\ldots,D-1$ has $B$ children and $B$ can be random. Let $J$ be the total number of nodes in the tree, and index the nodes in a chronological way.
    \STATE Set $S_1$ to be a randomly chosen observation from class~1.
    \FOR{node $j=2$ \TO $J$}
    \STATE
    Draw a random observation $i(j)$ from class~1. 
    \STATE Set $S_j = X_{i(j)} \cap S_{\pa(j)}$.
     \ENDFOR
   \STATE Denote the collection of resulting sets of all nodes at depth $d$, for $d=1, \ldots, D$, by \\$L_{d, m} = \{S_j: \;\depth(j)=d\}$.
      
   \ENDFOR
   \RETURN $L_D := \bigcup_{m=1} ^M L_{D, m}$.
 \end{algorithmic}
\end{algorithm}

Algorithm~1 describes a basic version of the \emph{Random Intersection Trees} procedure. We see that, each node in each tree is associated with a randomly drawn observation from class~1. For every tree, we visit each non-root node in turn, and compute the intersection of the observation assigned to it, and all those assigned to its ancestors. Because of the way the nodes are indexed, parents are always visited before their children, and this intersection can simply be computed as $S_j = X_{i(j)} \cap S_{\pa(j)}$. This is crucial to reducing the computational complexity of the procedure, as we shall see in the next section.

Each of the sets assigned to the leaf nodes of each of the trees yield a collection of potential candidate interactions, $L_D$. One could then proceed to test these as potential solutions to \eqref{eq:opt}; we present a more efficient approach in Section~\ref{section:stopping}, where we build this testing step into the construction of the trees. An illustration of this improved algorithm applied to the Tic-Tac-Toe data discussed in Section~\ref{section:numerical} is given in Figure~\ref{fig:tictactree}. Here, the root node contains a randomly drawn final win-state for black (class~1). This corresponds to $S_1$ in our algorithm. 
For each other node $j$, the randomly chosen additional black-win state $X_{i(j)}$ is shown along the edge from its parent node and the new intersection $S_j$ in the corresponding node. The early stopping that is added in the improved algorithm also allows to run until the algorithm has terminated in all nodes and no prior specification of the tree depth will thus be necessary in practice, as will be shown in Section~\ref{section:stopping}.

\begin{figure}
\begin{center}
\includegraphics[width=0.7\textwidth]{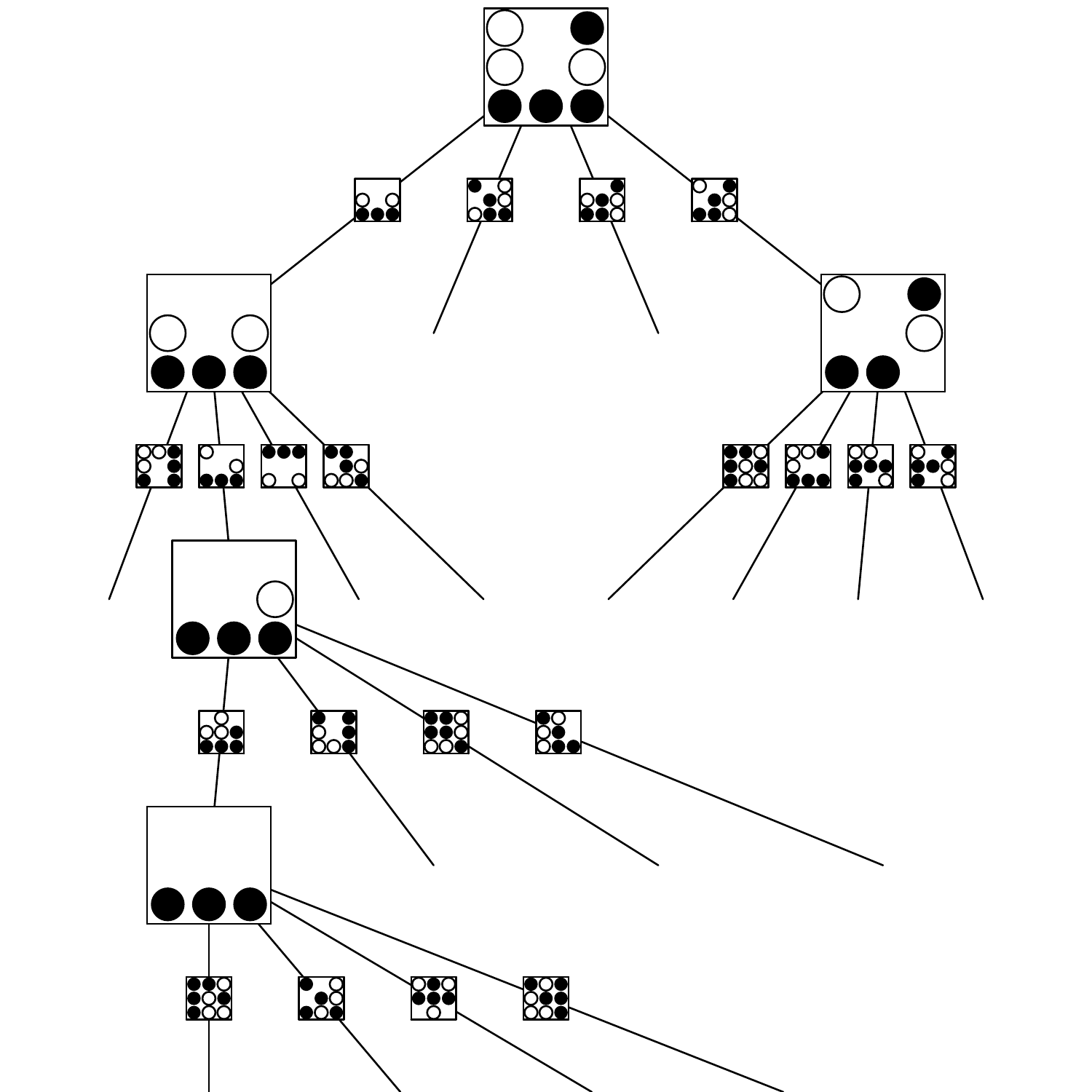}
\caption{ \label{fig:tictactree}  \textit{  An intersection tree. Starting with a randomly chosen class~1 (black wins) observation at the root node, $B=4$ randomly chosen class~1 observations are intersected with the pattern. These randomly chosen observations are shown along the edges and the resulting intersections $S_j$ as the nodes in the next layer of the tree. Nodes are only shown if the corresponding patterns $S_j$ have an estimated prevalence among class 0 below a set threshold; the branching of the tree terminates for all other nodes. The algorithm continues until all resulting $S_j$ corresponding to the leaf nodes have prevalence among class 0 exceeding the threshold. Here, one of the winning states for black is filtered out after three intersections.}}
\end{center}
\end{figure}

\section{Computational complexity}
\label{section:computational}
How many trees do we have to compute to have a very high probability of finding an interesting interaction $S$ that fulfills \eqref{eq:opt}? And what is the required size of these trees? If the interaction is not associated with a main effect, most approaches like trees and association rules would require of order $p^{|S|}$ searches. In this section, we show that in many settings, \emph{Random Intersection Trees} improves on this complexity. We consider a single interaction $S$ of size $s = |S|$, and examine the computational cost for returning $S$ as one of the candidate interactions, with a given probability. We will see that this depends critically on three factors:
\begin{itemize}
\item\textbf{Prevalence} $\theta_1 := \pr_{n}( S \subseteq X | Y=1)$ of the interaction pattern. If the pattern $S$ in question appears frequently in class~1, the search is more efficient.
\item\textbf{Sparsity} $\delta_k := \pr_n (k \in S |Y=1)$ of the predictor variables $k=1,\ldots,p$. If $\delta_k$ is very low for many $k$ (and sparsity of predictors consequently high), computation of the intersections is much cheaper, and so overall computational cost is greatly reduced. Indeed, for a fixed tree $m$, consider a node $j$ with depth $d<D$. We have that
\[
 \E(|S_j|) = \sum_{k =1} ^p \delta_k ^{d+1}.
\]
Thus, for $j' \in \ch(j)$, computation of $S_{j'}$ requires on average at most
\begin{equation*} 
 O\left(\log (p) \sum_{k =1} ^p \delta_k ^{d+1}\right)
\end{equation*}
operations. This is because in order to compute the intersection, one can check whether each member of $S_j$ is in $X_{i(j')}$, and each such check is $O(\log(p))$ if the sets $X_i$ are ordered so a binary search can be used. If we compare this to the $O(p)$ computations required to calculate each of the $S_j$ if no tree structure were used, we see that large efficiency gains are possible when $d \geq 1$ if many variables are sparse. For intersections with the root node, the tree structure offers no advantage, and in practice, branching the tree only after level 1 (so the root node has only one child), is more efficient, though this modification does not improve the order of complexity.
\item\textbf{Independence of }$S$: Define $\nu := \max_{k \in S^c} \pr_{n}(k \in X | S \subseteq X, Y=1)$. If $\nu$ is low, less computational effort is required to recover $S$. Note that if, for some $k \in S^c$, $\pr_{n}(k \in X | S \subseteq X) = 1$, interest would centre on $S \cup \{k\}$ rather than $S$ itself. Indeed, if $S$ satisfied \eqref{eq:opt}, so would $S \cup \{k\}$. In general, if $\nu$ is large, the search will tend to find sets containing $S$, though not necessarily $S$ itself.
\end{itemize}
With the assumptions that $\theta_1 > 0$ and $\nu < 1$, we can give a bound on the computational complexity of the basic version of \emph{Random Intersection Trees} introduced in the previous section.
\begin{thm}\label{thm:1}
 For suitable choices of $M$, $D$ and the distribution of $B$, the expected order of computations needed for $L_D$ to contain $S$ with probability $1 - \eta < 1$ is bounded above by the minimum over $\epsilon \in (0, 1]$ of
 \begin{equation}  \label{eq:result_1}
\log(1/\eta)\frac{\log^2 (p)}{\epsilon} \bigg\{p + \sum_{k:\,(1+\epsilon)\delta_k > \theta_1} p^{\frac{\log\{(1+\epsilon)\delta_k / \theta_1\}}{\log(1/\nu)}}  \bigg\} .
 \end{equation}
\end{thm}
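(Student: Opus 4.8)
The plan is to reduce the analysis to a single tree and then exploit independence across the $M$ trees. Since the trees are built independently, if $p_{\mathrm{tree}}$ denotes the probability that one tree produces a leaf $j$ at depth $D$ with $S_j = S$, then taking $M = \ceil{\log(1/\eta)/p_{\mathrm{tree}}}$ makes $\pr(S \in L_D) \ge 1 - (1-p_{\mathrm{tree}})^M \ge 1-\eta$, and the total expected cost is $M$ times the expected cost of a single tree. So the two ingredients I need are a lower bound on $p_{\mathrm{tree}}$ and an upper bound on the expected per-tree cost; the stated bound \eqref{eq:result_1} should emerge from their product after optimising over $\epsilon$.

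To bound $p_{\mathrm{tree}}$ I would track the set of nodes whose running intersection still contains $S$. Writing $R_d$ for the number of depth-$d$ nodes with $S \subseteq S_j$, the root satisfies $S \subseteq S_1$ with probability $\theta_1$, and given that a depth-$d$ node contains $S$, each of its $B$ children independently contains $S$ with probability $\theta_1$, because $S \subseteq X_{i(j')} \cap S_{\pa(j')}$ holds exactly when $S \subseteq X_{i(j')}$. Hence $\{R_d\}$ is a Galton--Watson process with offspring distribution $\mathrm{Binomial}(B,\theta_1)$ mixed over the random $B$, and mean $\E(B)\theta_1$; I would choose the law of $B$ so that this mean equals $1+\epsilon$, i.e. $\E(B)=(1+\epsilon)/\theta_1$, the freedom to randomise $B$ being what lets me hit this non-integer target while keeping $\Var(B)$ controlled. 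A second-moment (Kolmogorov-type) estimate then lower bounds the survival probability $\pr(R_D \ge 1)$ in terms of $\epsilon$ and $D$. The depth $D$ itself is fixed by a cleanup requirement: a surviving leaf contains $S$ but may contain extra variables, and for $k \in S^c$, conditionally on the path containing $S$, the event that $k$ survives all $D+1$ intersections has probability at most $\nu^{D+1}$; a union bound over the at most $p$ such $k$ makes a given surviving leaf equal to $S$ with probability at least $1 - p\nu^{D+1}$, so taking $D \approx \log(p)/\log(1/\nu)$ keeps this bounded below. This same $D$ is what turns geometric growth factors into the powers of $p$ in the theorem, since $D\log(1/\nu) \approx \log p$.

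For the expected per-tree cost, the expected number of nodes at depth $d$ is $\E(B)^d$, and by the identity $\E|S_d| = \sum_k \delta_k^{d+1}$ each child of a depth-$d$ node costs $O(\log(p)\,|S_d|)$ to form via binary search. Summing over levels, the expected cost of one tree is of order $\log(p)\sum_{k}\sum_{d=1}^{D}(\E(B)\delta_k)^d$ with $\E(B)\delta_k = (1+\epsilon)\delta_k/\theta_1$. I would then split this geometric sum according to the value of the ratio: variables with $(1+\epsilon)\delta_k \le \theta_1$ have ratio at most one and each contribute $O(1)$, giving the $O(p)$ term; variables with $(1+\epsilon)\delta_k > \theta_1$ have the sum dominated by its top term $(\E(B)\delta_k)^D = p^{\log\{(1+\epsilon)\delta_k/\theta_1\}/\log(1/\nu)}$, giving exactly the summands in \eqref{eq:result_1}. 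Multiplying the per-tree cost by $M$ and inserting the lower bound on $p_{\mathrm{tree}}$ should reproduce the prefactor $\log(1/\eta)\log^2(p)/\epsilon$, one factor of $\log p$ coming from binary search and the other from $D$ (equivalently from the near-critical survival probability).

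The hard part I expect is the honest lower bound on $p_{\mathrm{tree}}$, where the $1/\epsilon$ and the second factor of $\log p$ must be produced. Two difficulties intertwine. First, the process is only slightly supercritical, so its survival-to-depth-$D$ probability is delicate: it behaves essentially like a critical process, of order $1/D \approx \log(1/\nu)/\log p$, until $D$ exceeds $1/\epsilon$, so a bare first-moment bound is inadequate and a second-moment argument with $\Var(B)$ kept under control is needed. Second, the cleanup union bound must be carried out even though sibling leaves share path prefixes, making their impurity events dependent, so I must either argue along a single representative surviving path or control the dependence directly. Combining the survival estimate and the cleanup bound into a clean lower bound of the correct order, while leaving $\epsilon$ free for the final minimisation, is the crux of the argument.
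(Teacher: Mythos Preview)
Your overall architecture matches the paper's proof almost exactly: both decompose into a per-tree success probability and a per-tree expected cost, both take $M \approx \log(1/\eta)/p_{\mathrm{tree}}$, both compute the expected cost as $\log(p)\sum_k\sum_{d}(\E(B)\delta_k)^d$ and split the geometric sum at $(1+\epsilon)\delta_k=\theta_1$, and both choose $D\approx\log(p)/\log(1/\nu)$ via the union bound $\pr(S'\supsetneq S\mid S'\supseteq S)\le p\nu^{D+1}$. The cost analysis and the choice of $D$ are essentially identical to the paper's.

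The substantive difference is in how the per-tree success probability is controlled. You propose to pick $\E(B)=(1+\epsilon)/\theta_1$ and then use a second-moment (Kolmogorov) argument for the near-critical Galton--Watson process to lower bound survival to depth $D$, and then separately deal with the cleanup. The paper instead takes a fixed-point route: it lets $B\in\{b,b+1\}$ with mixing probability $\alpha$, and chooses $(b,\alpha)$ so that a prescribed $q\in(0,1)$ is a \emph{fixed point} of the one-step generating function $G(t)=\E\{(1-\theta_1(1-t))^B\}$. Since $G_D=G^{\circ D}$, this immediately gives $G_D(q')\le q$ for every $q'\le q$, and the paper observes the identity $G_D(\pr(S'\supsetneq S\mid S'\supseteq S))=\pr(S\notin L_{D,m}\mid E)$, so survival and cleanup are handled in a single stroke once $D$ is chosen to make $p\nu^{D+1}\le q$. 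The bound $b+\alpha\le(1+\epsilon)/\theta_1$ (with $\epsilon=(1-q)/(2q)$) then falls out of the fixed-point equation, recovering the same effective branching factor you aimed for.

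Both routes lead to the stated bound, but the fixed-point argument is shorter and sidesteps the two ``hard parts'' you flag: no explicit near-critical survival estimate is needed, and the combined generating-function identity absorbs the leaf-dependence issue rather than forcing you to argue along a single surviving path. Your plan would still go through, but with more bookkeeping.
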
 
As a function of the number of variables $p$, there is a contribution of $p\log^2(p)$ and an additional contribution in the brackets that depends on the sparsity $\delta_k$ of each variable. Sparse variables do not contribute to this sum and the sum can be arbitrarily close to 1 if the sparsity among variables is high enough. This would yield a computational complexity with order bounded above by $o(p^\kappa)$ for any $\kappa>1$, compared to the corresponding complexity of $p^s$ for a brute force search. In most interesting settings, however, we would not achieve a nearly linear scaling in complexity, but would hope to still be faster than a brute force search.

\paragraph{The influence of sparsity on computational complexity.} 
It is interesting to make the influence of the sparsity of individual variables, $\delta_k$, on the overall computational complexity, more explicit. We have the following corollary to Theorem~\ref{thm:1}.
\begin{cor}\label{cor:1}
Define $\beta$ by $\nu= \theta_1^\beta$.
Suppose that $\gamma,\alpha^\star,\alpha_\star$ are such that $\alpha^\star > \alpha_\star$, and 
\begin{align*}
\delta_k &\;\le\;   \theta_1^{1-\alpha^\star} & \mbox{for all } j\in\{1,\ldots,p\} \\ 
\delta_k & \;>\;  \theta_1^{1-\alpha_\star} & \mbox{for at most } O(p^\gamma) \mbox{ variables.}
\end{align*}
For suitable choices of $M$, $D$ and the distribution of $B$, the expected order of computations needed for $L_D$ to contain $S$ with probability $1 - \eta < 1$ is bounded above by
 \begin{equation}
o\big(  p^\kappa  \big) \qquad\mbox{ for any } \kappa> \max\bigg\{ \frac{\alpha^\star}{\beta} + \gamma,\; \Big[\frac{\alpha_\star}{\beta}\Big]_+ +1\bigg\}.
 \label{eq:result_2}
 \end{equation}
\end{cor}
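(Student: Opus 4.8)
The plan is to start from the explicit bound \eqref{eq:result_1} of Theorem~\ref{thm:1}, estimate each of its pieces under the stated sparsity hypotheses, and then make a judicious choice of the free parameter $\epsilon$. The first step is to rewrite the exponent appearing in the sum. Since $\nu = \theta_1^\beta$, we have $\log(1/\nu) = \beta\log(1/\theta_1)$, so for any variable $k$ contributing to the sum,
\[
\frac{\log\{(1+\epsilon)\delta_k/\theta_1\}}{\log(1/\nu)} = \frac{1}{\beta}\cdot\frac{\log(\delta_k/\theta_1)}{\log(1/\theta_1)} + \frac{\log(1+\epsilon)}{\beta\log(1/\theta_1)}.
\]
Substituting a bound of the form $\delta_k \le \theta_1^{1-\alpha}$ turns $\log(\delta_k/\theta_1)/\log(1/\theta_1)$ into $\alpha$, so the first term becomes $\alpha/\beta$, while the second term, which I abbreviate $c(\epsilon) := \log(1+\epsilon)/\{\beta\log(1/\theta_1)\}$, is a correction that vanishes as $\epsilon\to 0$.

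Next I would split the index set of the sum according to the two sparsity thresholds. For the ``dense'' variables, those with $\delta_k > \theta_1^{1-\alpha_\star}$, of which there are at most $O(p^\gamma)$ by assumption, I would apply the universal bound $\delta_k \le \theta_1^{1-\alpha^\star}$; each such term is then at most $p^{\alpha^\star/\beta + c(\epsilon)}$, giving a group total of order $p^{\gamma + \alpha^\star/\beta + c(\epsilon)}$. For the remaining ``sparse'' variables, of which there are at most $p$, I would use the sharper bound $\delta_k \le \theta_1^{1-\alpha_\star}$; here one must be careful because $\alpha_\star/\beta$ may be negative. Since only variables with $(1+\epsilon)\delta_k > \theta_1$ appear in the sum, the exponent is always strictly positive, yet it is bounded above by $[\alpha_\star/\beta]_+ + c(\epsilon)$, so this group contributes at most order $p^{1 + [\alpha_\star/\beta]_+ + c(\epsilon)}$. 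The lone $p$ term inside the braces of \eqref{eq:result_1} is of order no larger than this, since $1 + [\alpha_\star/\beta]_+ \ge 1$.

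Finally I would combine these estimates with the prefactor $\log(1/\eta)\log^2(p)/\epsilon$ and fix an explicit $\epsilon$. Taking, e.g., $\epsilon = 1/\log p$ makes $c(\epsilon)$ of order $1/\log p$, so that $p^{c(\epsilon)} = O(1)$, while simultaneously keeping the prefactor $\log(1/\eta)\log^2(p)/\epsilon = O(\log^3 p) = p^{o(1)}$. The overall bound is then of order $p^{\max\{\gamma + \alpha^\star/\beta,\; 1 + [\alpha_\star/\beta]_+\} + o(1)}$, which is $o(p^\kappa)$ for every $\kappa$ strictly above this maximum, exactly as claimed in \eqref{eq:result_2}.

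The main obstacle I anticipate is the bookkeeping around the $\epsilon$-correction and the positive-part term: one must check that $\epsilon$ can be sent to zero slowly enough that the $1/\epsilon$ factor stays subpolynomial in $p$, yet fast enough that each correction $c(\epsilon)$ is absorbed into the $o(1)$ in the exponent, and one must treat the case $\alpha_\star \le 0$ separately so that the sparse group is correctly bounded by $p^{1+0}$ rather than by an expression with a negative exponent. Everything else reduces to the elementary exponent manipulation above together with the counting of the two groups of variables.
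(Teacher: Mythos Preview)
Your proposal is correct and follows essentially the same route as the paper: both start from the bound of Theorem~\ref{thm:1}, split the sum at the threshold $\delta_k=\theta_1^{1-\alpha_\star}$ into a ``dense'' group of $O(p^\gamma)$ variables (bounded via $\delta_k\le\theta_1^{1-\alpha^\star}$) and a ``sparse'' group of at most $p$ variables (bounded via $\delta_k\le\theta_1^{1-\alpha_\star}$), and then send $\epsilon\to 0$. The paper simply states the resulting two-term bound (with indicator functions playing the role of your $[\cdot]_+$) and appeals to ``making $\epsilon$ arbitrarily small'', whereas you make this explicit with $\epsilon=1/\log p$ and track the correction $c(\epsilon)$ and the $\log^2(p)/\epsilon$ prefactor---a slightly more detailed but equivalent argument.
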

The implication of Theorem~\ref{cor:1} is most apparent if we take $\gamma=1$ as we can then set $\alpha_\star=0$. In this case, \[ \alpha^\star= 1- \frac{ \log(\max_k \delta_k)}{\log(\theta_1)}.\]
We can then bound the computational complexity by 
\begin{equation}\label{eq:boundfrac} o\big(  p^\kappa  \big) \qquad\mbox{ for any } \kappa> 1+\frac{\log(1/\theta_1)-\log(1/\max_k \delta_k)}{\log(1/\nu)}. \end{equation}
The fraction on the right-hand side is a function of the prevalence of the pattern $S$, $\theta_1$, the maximum sparsity of the variables, and the maximum sparsity of the variables in $S^c$, conditional on the presence of $S$.   As long as this fraction is less than 1, the computational complexity is guaranteed to be better than a brute force search with the knowledge that $s=2$, and the relative advantage grows for larger sizes of the pattern. 

\paragraph{Independent noise variables.}
To gain further insight, we consider the special case where variables in $S^c$ are independent of $S$ (conditional on being in class~1), in the sense that for all $k \in S^c$,
\begin{equation} \label{eq:indep}
 \pr_n (k \in X |S \subseteq X, Y=1) = \pr_n (k \in X | Y=1) = \delta_k.
\end{equation}
\begin{cor}\label{cor:2}
Assume \eqref{eq:indep} and that $\delta_k < 1$ for all $k$. Define $\tau:=\log(\theta_1) / \log(\max_k \delta_k)$. Then for suitable choices of $M$, $D$ and the distribution of $B$, the expected order of computations needed for $L_D$ to contain $S$ with probability $1 - \eta < 1$ is bounded above by
\begin{equation} o(p^\kappa) \qquad\mbox{ for any }\kappa > \tau.  \label{eq:result_3}
\end{equation}
\end{cor}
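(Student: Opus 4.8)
The plan is to obtain Corollary~\ref{cor:2} as a direct specialisation of the bound \eqref{eq:result_1} from Theorem~\ref{thm:1}. First I would record the effect of the independence assumption \eqref{eq:indep}: since $\pr_n(k \in X \mid S \subseteq X, Y=1) = \delta_k$ for every $k \in S^c$, the quantity $\nu$ collapses to $\nu = \max_{k \in S^c} \delta_k$, so $\log(1/\nu) = \log(1/\max_{k \in S^c}\delta_k)$. I would also isolate two elementary facts that drive the whole estimate: writing $\delta^\star := \max_k \delta_k$, we have $\delta^\star \ge \theta_1$ (any $k \in S$ satisfies $\delta_k \ge \pr_n(S \subseteq X \mid Y=1) = \theta_1$ since $\{S \subseteq X\} \subseteq \{k \in X\}$) and $\nu \le \delta^\star$.

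Next I would bound the bracketed sum in \eqref{eq:result_1}. Using the crude bounds $\delta_k \le \delta^\star$ for every contributing $k$, together with the fact that at most $p$ indices contribute, the sum is at most $p \cdot p^{\,e(\epsilon)}$ with $e(\epsilon) = \log\{(1+\epsilon)\delta^\star/\theta_1\}/\log(1/\nu)$, so the relevant overall exponent is
\[
1 + e(\epsilon) \;=\; 1 + \frac{\log\{(1+\epsilon)\delta^\star/\theta_1\}}{\log(1/\nu)},
\]
which tends to $1 + \log(\delta^\star/\theta_1)/\log(1/\nu)$ as $\epsilon \to 0^+$. The crux is to verify that this limiting exponent is at most $\tau = \log\theta_1/\log\delta^\star$. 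Setting $u = \log(1/\theta_1)$, $v = \log(1/\delta^\star)$ and $w = \log(1/\nu)$, all positive, the two facts above read $v \le u$ and $v \le w$, and the claim reduces to $1 + (u-v)/w \le u/v$, which holds because $(u-v)/w \le (u-v)/v = u/v - 1$. Since moreover $\tau = u/v \ge 1$, the standalone term $p$ in \eqref{eq:result_1} is itself dominated by $p^\tau$.

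Finally I would convert this into the stated $o(p^\kappa)$ conclusion by choosing $\epsilon$ as a function of $\kappa$ rather than letting it tend to $0$ (which would blow up the $1/\epsilon$ prefactor). Given any $\kappa > \tau$, continuity of $\epsilon \mapsto 1 + e(\epsilon)$ supplies a fixed $\epsilon_0 \in (0,1]$ with $\max\{1,\,1 + e(\epsilon_0)\} < \kappa$; evaluating \eqref{eq:result_1} at this $\epsilon_0$ then yields a bound of the form $C\log^2(p)\,p^{\kappa'}$ with $\kappa' < \kappa$, and since $\log^2(p) = o(p^{\kappa - \kappa'})$ the whole expression is $o(p^\kappa)$, as required.

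I expect the only genuine obstacle to be the bookkeeping in this last choice-of-$\epsilon$ step, in particular ensuring that both the explicit $p$ term and the summation term are simultaneously kept below $p^\kappa$ while $\epsilon_0$ stays bounded away from $0$; the algebraic check that the exponent never exceeds $\tau$ is routine once the substitution $u,v,w$ is made. A minor point to dispatch cleanly is the degenerate case in which the sum in \eqref{eq:result_1} is empty, where the bound reduces to the $p\log^2(p)$ term and the conclusion is immediate from $\tau \ge 1$.
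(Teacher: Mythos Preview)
Your proposal is correct. The paper does not spell out a separate proof of Corollary~\ref{cor:2}, but your argument is exactly the intended specialisation of Theorem~\ref{thm:1} (equivalently, of the bound \eqref{eq:boundfrac} obtained from Corollary~\ref{cor:1} with $\gamma=1$, $\alpha_\star=0$): under \eqref{eq:indep} one has $\nu=\max_{k\in S^c}\delta_k\le\delta^\star$, whence the limiting exponent satisfies $1+(u-v)/w\le u/v=\tau$, and your $\epsilon$-to-$\kappa$ bookkeeping mirrors the paper's handling in the proof of Corollary~\ref{cor:1}.
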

We see that the computational complexity is approximately linear in $p$ if the prevalence of the pattern $S$ is as high as the prevalence of the least sparse predictor variables. 

We can also consider the situation where in addition to the independence \eqref{eq:indep}, all variables have the same sparsity $\delta$.
If the prevalence $\theta_1$ of $S$ is only as high as that of a random occurrence of two independent predictor variables, we get $\tau=2$ and the computational complexity is quadratic in $p$. In this case, the algorithm would not yield a computational advantage over brute force search if looking for patterns of size 2. This is to be expected since \emph{every} pattern $S$ of size 2 would have the same prevalence in this scenario, and so there is nothing special about a pattern $S$ of size 2 with prevalence $\delta^2$, and in general no hope of beating the complexity $p^{s}$ of a brute force search.  However, the bound in (\ref{eq:result_3}) is independent of $s$. Thus provided the prevalence $\theta_1$ drops more slowly that the rate $\delta^{s}$, at which every pattern of size $S$ would occur randomly among independent predictor variables, our results show that \emph{Random Intersection Trees} is still to be preferred over a brute force search.

\section{Early stopping using min-wise hashing}
\label{section:stopping}
While Algorithm~1 is computationally attractive, the following observation suggests that further improvements are possible. Suppose that, for a particular tree, we have just computed the intersection $S_j$ corresponding to a node $j$ at depth $d<D$. If
\[
 \pr_n (S_j \subseteq X| Y = 0) > \theta_0, 
\]
then since for all $j' \in \de(j)$, $S_{j'} \subseteq S_j$, we also have
\[
 \pr_n (S_{j'} \subseteq X | Y = 0) > \theta_0.
\]
Thus no intersection sets corresponding to descendants of $j$ have any hope of yielding solutions to \eqref{eq:opt}, and so all further associated computations are wasted.

In view of this, one option would be to compute the quantity $\pr_n (S_j \subseteq X | Y=0)$ at each node $j$ as the algorithm progresses, and if this exceeds the threshold $\theta_0$, not visit any descendants $j'$ of $j$ for computation of $S_{j'}$. This could be prohibitively costly, though, as it would require a pass over all observations in class 0, for each node of each tree. One could work with a subsample of the observations, but if $\theta_0$ is low, the subsample size may need to be fairly large in order to estimate the probabilities to a sufficient degree of accuracy.

Instead, we propose a fast approximation, using some ideas based on min-wise hashing 
\citep{broder1998min,cohen2001finding,datar2002estimating} applied to the columns of the data-matrix. We describe the scheme by leaving aside the conditioning on $Y=0$, which can be added at the end by restricting to observations in class 0.  Consider taking a random permutation $\sigma$ of all observations $\{1,\ldots,n\}$. Let $h_\sigma(k)$ be the minimal value $\iota$ such that variable $k$ is active in observation $\sigma(\iota)$:
\[  h_\sigma(k) =\min \{\iota' : k \in X_{\sigma(\iota')}\} .  \]
It is well known \citep{broder1998min} that the probability that $h_\sigma(k)$ and  $h_\sigma(k')$ agree for two variables $k,k'$ under a random permutation $\sigma$ is identical to the Jaccard-index for the two sets $I_k=\{i : k\in X_i\}$ and $I_{k'}=\{i: k'\in X_i\}$, that is 
\[
\pr_\sigma(h_\sigma(k)=h_\sigma(k'))\;=\; \frac{|I_k\cap I_{k'}|}{|I_k\cup I_{k'}|}.
\]
Here the subscript $\sigma$ indicates that the probability is with respect to a random permutation $\sigma$ of the observations. A min-wise hash scheme is typically used to estimate the Jaccard-index 
by approximating the probability on the left-hand side of the equation above.

Now,
\begin{align*} 
\pr_n(S \subseteq X) & = \pr_n( k \in X \mbox{ for all } k \in S ) 
\\ & = \pr_n(k \in X \mbox{ for all } k \in S \, | \, \exists \, k' \in S \mbox{ such that } k' \in X ) \\
& \qquad  \times \pr_n(\exists k \in S \mbox{ such that } k \in X). 
\end{align*}
Let us denote the first and second terms on the right-hand side by $\pi_1 (S)$ and $\pi_2 (S)$ respectively.
Note that $\pi_1 (S)$ is equal to the probability that all variables $k \in S$ have the same min-wise hash value $h_\sigma(k)$:
\begin{equation} \label{eq:pi1}
 \pi_1 (S) = \pr_\sigma (\exists \, i : h_\sigma(k)=i \mbox{ for all } k\in S).
\end{equation}

Turning now to $\pi_2 (S)$, observe that
\begin{equation} \label{eq:expect_hash}
 \E_\sigma (\min_{k \in S} h_\sigma (k)) = \frac{n + 1}{\pi_2 (S) n + 1},
\end{equation}
and so
\begin{equation} \label{eq:pi2}
 \pi_2 (S)= \frac{n+1}{n} \left\{ \frac{1}{\E_\sigma (\min_{k \in S} h_\sigma (k))} - \frac{1}{n+1} \right\}.
\end{equation}
A derivation of \eqref{eq:expect_hash} is given in the appendix.

Equations \eqref{eq:pi1} and \eqref{eq:pi2} provide the basis for an estimator of $\pr_n(S\subseteq X)$. First we generate $L$ random permutations of $\{1, \ldots, n\}$: $\sigma_1, \ldots , \sigma_L$. We then use these to create an $L \times p$ matrix $H$ whose entries are given by
\[
 H_{l k} = h_{\sigma_l}(k).
\]
Now we estimate $\pi_1 (S)$ and $\pi_2 (S)$ by their respective finite-sample approximations,  $\hat{\pi}_1 (S) $ and $\hat{\pi}_2 (S) $:
\begin{gather*}
 \hat{\pi}_1 (L; S, H) := \tfrac{1}{L} \sum_{l=1}^L \ind_{\{ H_{l k}=H_{l k'} \text{ for all } k,k'\in S\}} \\
 \hat{\pi}_2 (L; S, H) := \frac{n+1}{n} \left\{ \frac{1}{\tfrac{1}{L}\sum_{l=1}^L  \min_{k\in S} H_{l k}} - \frac{1}{n+1} \right\}.
\end{gather*}
Finally, we estimate $\pr_n (S\subseteq X)$ by
\begin{equation} \label{eq:approx}
 \hat{\pr}_n (L; S, H) := \hat{\pi}_1 (S, H) \cdot \hat{\pi}_2 (S, H).
\end{equation}
To our knowledge, this use of min-wise hashing techniques, and in particular the estimator $\hat{\pi}_2 (S) $, is new. The estimator enjoys reduced variance compared to that which would be obtained using subsampling, as the following theorem shows.
\begin{thm} \label{thm:minhash_asymp} For $\hat{\pr}_n (L; S, H)$, $\pi_1(S)$ and $\pi_s(S)$ defined as in (\ref{eq:approx}), (\ref{eq:pi1}), and (\ref{eq:pi2}) respectively, 
 \begin{equation} \label{eq:minhash_asymp}
 \sqrt{L} (\hat{\pr}_n (L; S, H) - \pr_n(S\subseteq X))  \indist N(0, \pi_2 (S)  ^2 \pi_1 (S) (1 - \pi_1 (S) \pi_2 (S)) + o_n(1)).
\end{equation}
\end{thm}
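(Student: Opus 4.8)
The plan is to exploit the fact that the $L$ permutations $\sigma_1,\ldots,\sigma_L$ are drawn independently, so that the per-permutation summary statistics are i.i.d.\ across $l$, and then to combine a bivariate central limit theorem with the delta method. For each permutation index $l$, set $A_l := \ind_{\{H_{lk}=H_{lk'}\ \text{for all}\ k,k'\in S\}}$ and $M_l := \min_{k\in S} H_{lk}$, so that $\hat\pi_1 = \bar A := L^{-1}\sum_l A_l$ and $\hat\pi_2 = \phi(\bar M)$ with $\bar M := L^{-1}\sum_l M_l$ and $\phi(m) := \tfrac{n+1}{n}\big(m^{-1}-(n+1)^{-1}\big)$. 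Thus $\hat{\pr}_n(L;S,H) = g(\bar A, \bar M)$ where $g(a,m) := a\,\phi(m)$ is smooth away from $m=0$. The vectors $(A_l,M_l)$ are i.i.d.\ with $\E A_l = \pi_1(S)$ and, by \eqref{eq:expect_hash}, $\E M_l = (n+1)/(\pi_2(S)n+1) =: \mu$; one checks directly that $g(\pi_1(S),\mu)=\pi_1(S)\pi_2(S)=\pr_n(S\subseteq X)$, so the centring is correct and the whole statement is a statement about $L\to\infty$ for the fixed (empirical) data.

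First I would record the covariance structure of $(A_l,M_l)$. Since $A_l$ is Bernoulli, $\Var(A_l)=\pi_1(S)(1-\pi_1(S))$ exactly. The quantity $M_l$ is the position, under a uniform random permutation of the $n$ observations, of the first observation containing at least one variable of $S$; writing $m=n\pi_2(S)$ for the number of such ``marked'' observations, $M_l$ follows the standard negative-hypergeometric law, whose variance I would show converges to $(1-\pi_2(S))/\pi_2(S)^2$ as $n\to\infty$, the geometric limit consistent with $\mu\to 1/\pi_2(S)$.

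The key, and cleanest, step is to show that $\Cov(A_l,M_l)=0$ exactly. I would argue this by exchangeability: conditional on $\{M_l=j\}$, the observation occupying position $j$ is, by symmetry among the marked observations, uniform over the $m$ marked observations, and exactly a fraction $\pi_1(S)$ of these contain all of $S$; hence $\pr(A_l=1\mid M_l=j)=\pi_1(S)$ for every $j$, so $A_l$ and $M_l$ are uncorrelated and the cross term in the delta-method variance vanishes.

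Finally I would assemble the pieces. Applying the bivariate CLT to $(\bar A,\bar M)$ and the delta method with $g$, the asymptotic variance equals $(\partial_a g)^2\Var(A_l)+(\partial_m g)^2\Var(M_l)$, the cross term being absent. Evaluating the gradient at $(\pi_1(S),\mu)$ gives $\partial_a g=\pi_2(S)$ exactly and $\partial_m g\to -\pi_1(S)\pi_2(S)^2$ as $n\to\infty$. Substituting and simplifying yields $\pi_2(S)^2\pi_1(S)(1-\pi_1(S)) + \pi_1(S)^2\pi_2(S)^2(1-\pi_2(S)) = \pi_2(S)^2\pi_1(S)(1-\pi_1(S)\pi_2(S))$, the claimed variance, with all finite-$n$ discrepancies in $\partial_m g$ and in $\Var(M_l)$ collected into the $o_n(1)$ term. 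The main obstacle is the book-keeping of the two limits: the CLT is in $L$ for each fixed $n$, whereas the clean variance is its $n\to\infty$ form, so care is needed in justifying that the negative-hypergeometric variance asymptotics and the gradient corrections may be absorbed into $o_n(1)$. It is the exact uncorrelatedness of $A_l$ and $M_l$ that makes the final expression collapse so neatly.
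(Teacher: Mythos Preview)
Your proposal is correct and follows essentially the same route as the paper: both arguments rest on the independence of $A_l$ and $M_l$ for each permutation, apply a CLT to the two pieces, and combine via a delta-method/Slutsky step together with the (negative-hypergeometric) variance of $M_l$ from the paper's Lemma~\ref{lem:var}. Your exchangeability argument showing $\pr(A_l=1\mid M_l=j)=\pi_1(S)$ is in fact a useful addition, as the paper simply asserts the independence of $\hat\pi_1$ and $\tilde\pi_2^{-1}$ without proof.
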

A derivation is given in the appendix. Here the subscript $n$ in $o_n (1)$ is used to indicate that $n$ is tending to infinity. Comparing the variance of the normal distribution in \eqref{eq:minhash_asymp} to that which would be obtained if subsampling ($\pi_2 (S) \pi_1 (S) (1 - \pi_1(S) \pi_2 (S)) + o_n(1)$), we see that a factor of $\pi_2 (S)$ is gained: matching the accuracy of subsampling with the min-wise hash scheme would require roughly $1/\pi_2 (S)$ times as many samples.
By using min-wise hashing, choosing $L=100$ typically delivers a reasonable approximation as long as we just want to resolve values at $\theta_0=0.01$ and above.

An improved version of Algorithm 1, building in the ideas discussed above, is given in Algorithm 2 below. Note that $\hat{\pr}_n(S_{\pa(j)},H)$ need only be computed once for every $j$ with the same parent. 

\begin{algorithm}[h!]
\caption{Random Intersection Trees with early stopping}
 \begin{algorithmic}
 \STATE Compute the $L\times p$ min-wise hash matrix $H$, using only class 0 observations.
  \FOR{tree $m=1$ \TO $M$}
    \STATE Construct rooted tree $m$. Each node in levels $0,\ldots,D-1$ has $B$ children and $B$ can be random. Let $J$ be the total number of nodes in the tree, and index the nodes in a chronological way.
    \STATE Set $S_1$ to be a randomly chosen observation from class~1.
    \FOR{node $j=2$ \TO $J$}
    \IF{$\hat{\pr}_n(S_{\pa(j)},H) \le \theta_0$}
    \STATE
   Draw a random observation $i(j)$ from class~1. 
    \STATE Set $S_j = X_{i(j)} \cap S_{\pa(j)}$.
    \ENDIF
     \ENDFOR
   \STATE Denote the collection of resulting sets of all nodes at depth $d$, for $d=1, \ldots, D$, by \\$L_{d, m} = \{S_j: \;\depth(j)=d\}$.
   \ENDFOR
   \RETURN $L_D := \bigcup_{m=1} ^M L_{D, m}$.
 \end{algorithmic}
\end{algorithm}

Early stopping decreases the computational cost of the algorithm as many nodes in the trees generated may not need to have their associated intersections calculated. In addition, the set of candidate intersections $L_D$ will be smaller but the chance of it containing interesting intersections would not decrease by much. These gains comes at a small price, since the min-wise hash matrix $H$ must be computed, and the computational effort going into this will in turn determine the quality of the approximation in \eqref{eq:approx}. We have previously shown the complexity bounds in the absence of early stopping and thus avoided the difficulty of making this trade-off explicit. We will use the improved version of \emph{Random Intersection Trees} with early stopping in all the practical examples to follow, taking small values of $L$ in the range of a (few) hundred permutations.

The depth $D$ of the tree is still given explicitely in Algorithm~2. An interesting modification creates the tree recursively. Starting with the root node, $B$ children are added to all leaf nodes of the tree in which the  early stopping citerion has not been triggered yet. When the algorithm terminates, all intersection in the leaf nodes of the final tree are collected. 

\section{Numerical Examples}
\label{section:numerical}
In this section, we give two numerical examples to provide further insight into the performance of our method. The first is about learning the winning combinations for the well-known game Tic-Tac-Toe. This example serves to illustrate how \emph{Random Intersection Trees} can succeed in finding interesting interactions when other methods fail. The second example concerns text classification. Specifically, we want to find simple characterisations (using only a few words, or word-stems in this case) for classes within a large corpus in a large-scale text analysis application.

\subsection{Tic-Tac-Toe endgame prediction}

\begin{figure}
\begin{center}
\includegraphics[width=1\textwidth]{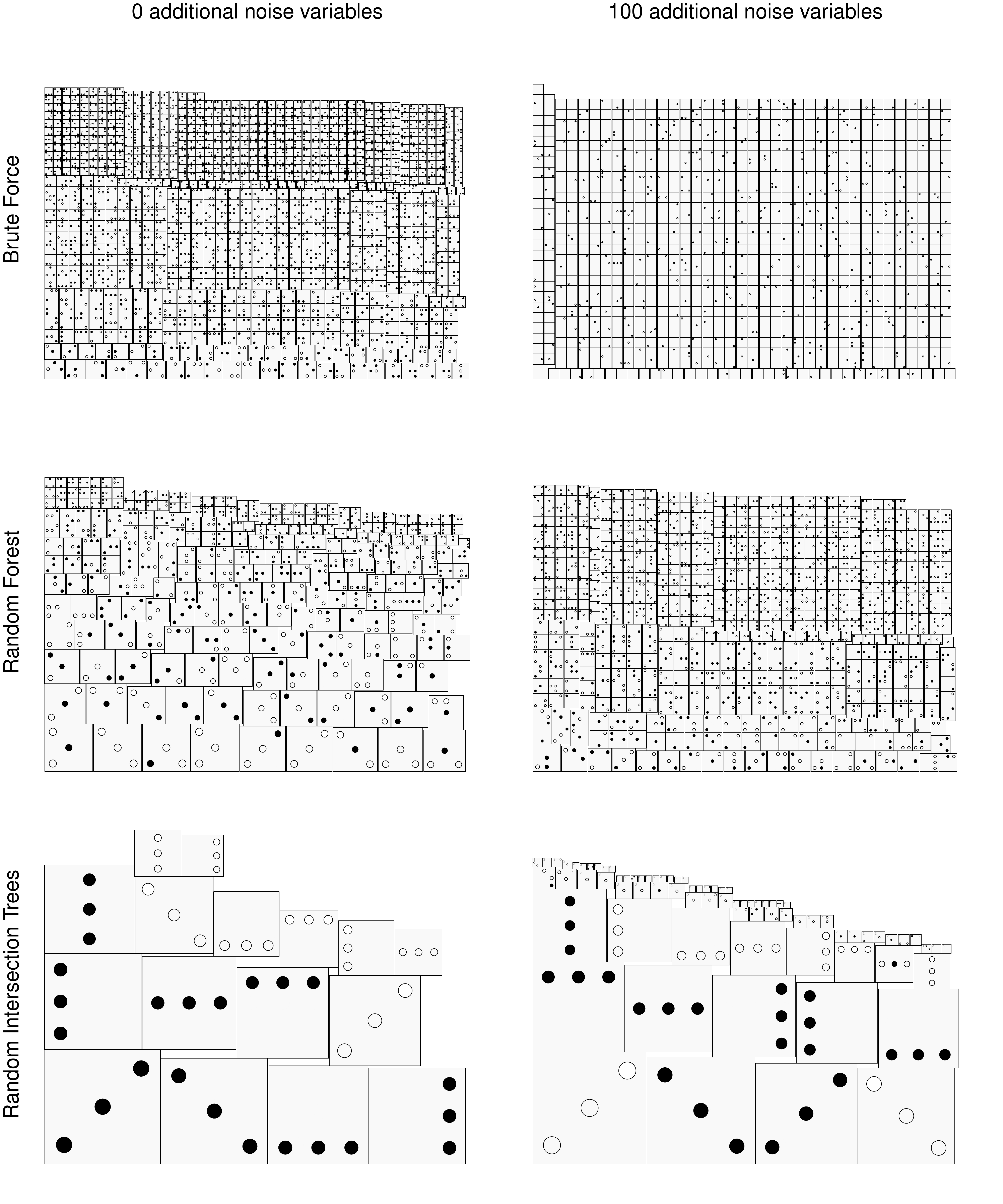}
\caption{ \label{fig:tictac}  \textit{The probability of choosing a given pattern with {Random Intersection Trees} (bottom row), {Random Forests} of depth 3 (middle row) and brute force search among all interactions of size 3 (top row) for the Tic-Tac-Toe data (left panel); and the same results in the case when 100 noise variables are added (right panel). Note that {Random Intersection Trees} were not constrained to find interactions of depth 3. The area of each pattern is proportional to the probability of being chosen. In the case with noise variables, some of the patterns with the very smallest areas also contained a small number of noise variables, which are not shown. Just counting three- to five-way interactions, there are more than $10^8$  potential interactions when 100 noise variables are added.}}
\end{center}
\end{figure}

\begin{figure}
\begin{center}
\includegraphics[width=0.9\textwidth]{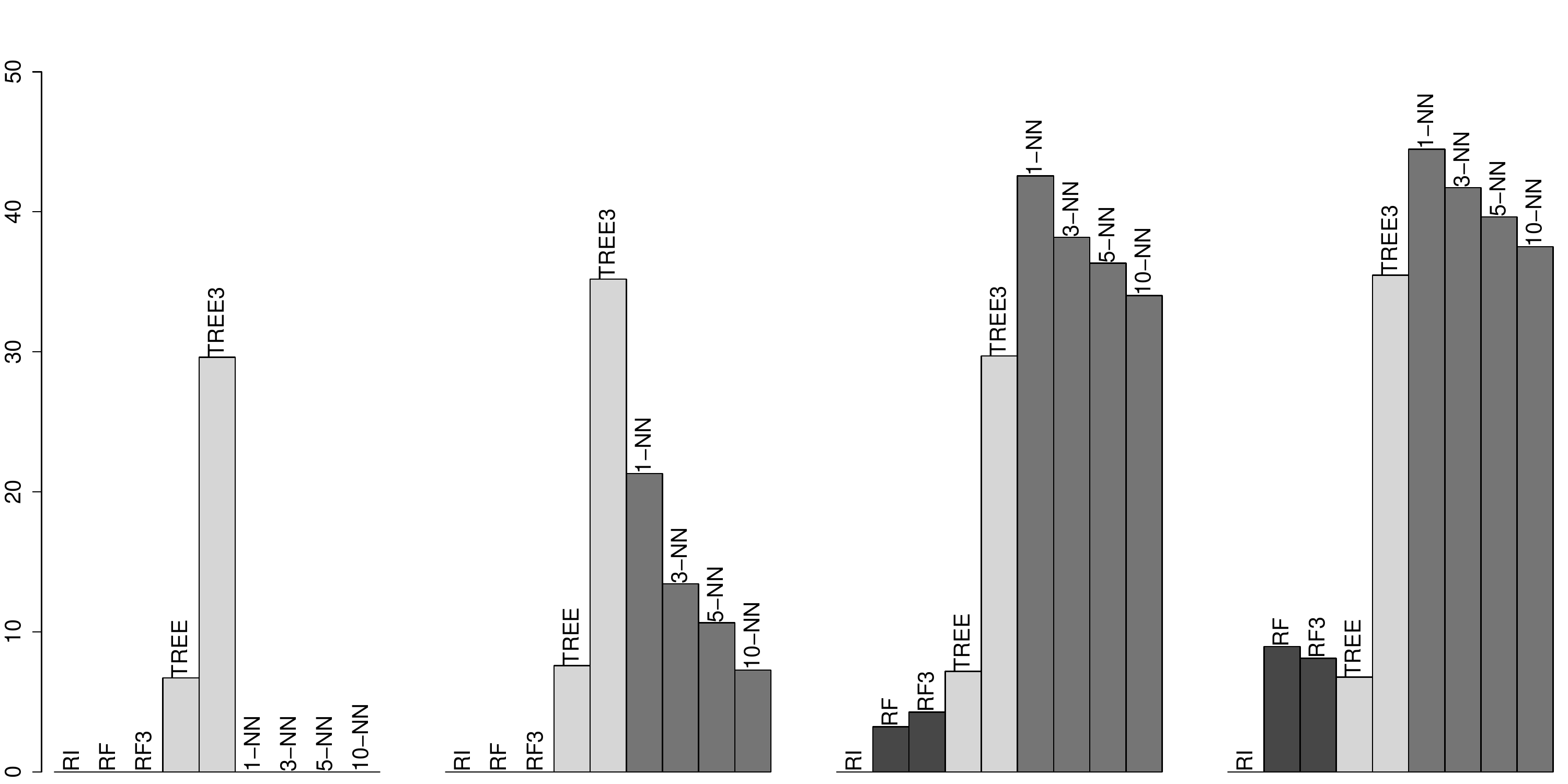}
\caption{ \label{fig:tictac2}  \textit{From left to right: the misclassification rate (in \%) on Tic-Tac-Toe data for 0, 60, 300 and 400 added noise variables. Each classifier is tuned to have equal misclassification rate in both classes. The simple classifier based on {Random Intersection Trees} (RI) has a misclassification rate of 0\% in all cases, as the winning patterns are sampled very frequently (see Figure~\ref{fig:tictac}). {Random Forests} (RF) and {Random Forests} limited to depth 3 trees (RF3) are competitive but the misclassification rate increases sharply when many noise variables are added.}}
\end{center}
\end{figure}

The Tic-Tac-Toe endgame dataset \citep{matheus1989constructive,aha1991instance} contains all possible winning end states of the game Tic-Tac-Toe, along with which player (white or black) has won for each of these. There are just under 1000 possible such end states, and our goal is to learn the rules that determine which player wins from a randomly chosen subset of these. We use half of the observations for training, and the other half for testing.

There are 9 variables in the original dataset which can take the values `black', `white' or `blank'. These can trivially be transformed into a set of twice as many binary variables where the first block of variables encodes presence of black and the second block encodes presence of white.

Two properties of this dataset that make it particularly interesting for us here are: 
\begin{itemize}
\item The presence of interactions is obvious by the nature of the game.
\item There are only very weak marginal effects. Knowing that the upper right corner is occupied by a black stone is only very weakly informative about the winner of the game. Greedy searches by trees fail in the presence of many added noise variables and linear models do not work well at all.
\end{itemize}

We apply \emph{Random Intersection Trees} to finding patterns that indicate a black win (class~1), and also patterns that indicate a white win (class 0). We use the early stopping modifications proposed in Section~\ref{section:stopping}, and create two min-wise hash tables from the available observations in each of the classes, taking $L=200$. Figure \ref{fig:tictactree} shows how the individual Intersection Trees are constructed and illustrates the use of the early stopping rule. We emphasise that we do not need to specify or know that the winning states are functions of only three variables. We let each tree run until all its branches terminate, and collect all resulting leaves.

Figure \ref{fig:tictac} illustrates the importance sampling effect of \emph{Random Intersection Trees} when using only the training data, and adding a varying number of noise variables. When adding 100 noise variables, all 16 winning final combinations are among the 40 most frequently chosen patterns.  All winning states are chosen hundreds of millions times more often than a random sampling of interactions would pick them.

As discussed in Section~\ref{section:intro}, the interactions or rules that are found could be entered into any existing aggregation method, such as \emph{Rule Ensembles} \citep{friedman2005plv}  or \emph{Decision Lists} \citep{marchand2006learning,rivest1987learning}. Here, we consider an even simpler aggregation method by selecting all patterns during $1000$ iterations of \emph{Random Intersection Trees} (with $B=5$ samples as branching factor in each tree) that were selected by at least two trees.
For each selected pattern, we compute the (empirical) class distributions conditional on the presence and absence of the pattern, using the training sample. That is, for each selected pattern $S$, we compute
\begin{gather*}
 \pr_n (Y=1 | X \subseteq S) \; \text{ and } \;\pr_n (Y=1 | X \nsubseteq S).
\end{gather*}
Then, given an observation from the test set, we classify according to the average of the log-odds of being in class~1 calculated from each of the conditional probabilities above.

Figure \ref{fig:tictac2} shows the misclassification rates under situations with different numbers of added noise variables. The simple prediction based on \emph{Random Intersection Trees} achieves perfect classification even when 400 noise variables are added.
Neither $k$-NN nor \emph{CART} \citep{CART}, either restricted to trees of depth 3 (TREE3) or depth chosen by cross-validation (TREE), are as successful, giving misclassification rates between 5\% and 40\%. Interestingly, trees of depth 3 perform much worse than deeper trees. The winning patterns are not identified in a pure form but only after some other variables have been factored in first. 
This also means that it is very hard to read the winning states of the trees, unlike the patterns obtained by our method. \emph{Random Forests} also maintain a 0\% misclassification rate up until about a hundred added noise variables but start to degrade in performance when further noise variables are added. It is easy to identify the noise variables from a variable importance plot \citep{strobl2008conditional}. However,
within the signal variables the patterns are not easy to see since each variable is approximately equally important for determining the winner (with the slight exception of the middle field in the $3 \times 3$ board which is more important than the other fields) and the nature of the interactions is thus not obvious from analysing a \emph{Random Forest} fit.

\subsection{Reuters RCV1 text classification}

\begin{figure}
\begin{center}
\includegraphics[width=0.98\textwidth]{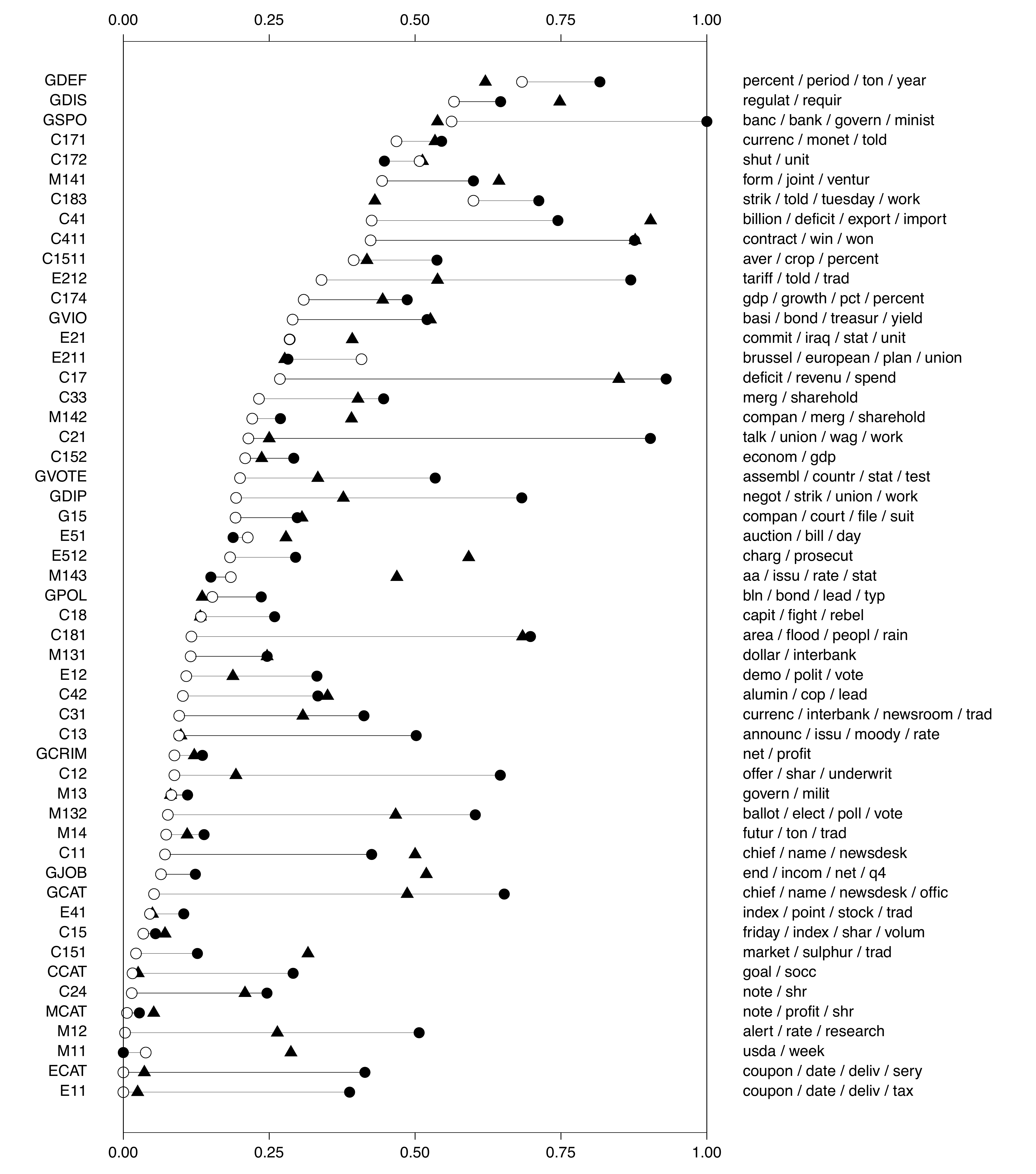}
\caption{ \label{fig:losstext}  \textit{ The misclassification rate $\pr_n(c\notin Y |S\subseteq X)$ on the test data for a patterns $S$ chosen with  a tree ensemble node generation mechanism (black circle), Random Intersections (white circle), and a linear method (black triangle) for topics $c\in C$ in the Reuters RCV1 text classification data. The topics are shown on the left and the word combinations chosen by Random Intersection Trees on the right.  }}
\end{center}
\end{figure}
The Reuters RCV1 text data contain the \emph{tf--idf} (term frequency--inverse document frequency) weighted presence of $47148$ word-stems in each document; for details on the collection and processing of the original data, see \citet{lewis2004rcv1}. Each document is assigned possibly more than one topic. Here we are interested in whether \emph{Random Intersection Trees} is able to give a quick and accurate summary of each topic. For each topic, we seek sets of word-stems, $S$, whose simultaneous presence is indicative of a document falling within that topic.

To evaluate the performance of \emph{Random Intersections}, we divide the documents into a training and test set with the first batch of 23149 documents as training and the following 30000 documents as test documents. We compare our procedure to an approach based on \emph{Random Forests} and a simple linear method.

\emph{Random Forests} and classification trees can be very time- and memory-intensive to apply on a dataset of the scale we consider here. In order to be able to compute \emph{Random Forests}, we only consider word-stems if  they appear in at least  100 documents in the training data. This leaves $2484$ word-stems as predictor variables. We also only consider topics that  contain at least 200 documents. To simplify the problem further, we consider a binary version of the predictor variables for all methods, using a 1 or 0 to represent whether each tf--idf value is positive or not.

Let $C$ be the set of topics in our modified dataset. Let $Y\subseteq C$ indicate the topics that a given document belongs to. Consider a topic or class $c\in C$.
Our goal is to find patterns $S$ that maximise
\begin{equation} \label{eq:opt_reuters}
\pr_n(c\in Y|S\subseteq X), 
\end{equation}
whilst also maintaining that the prevalence of $S$ among all observations be bounded away from 0. Specifically, we shall require that
\begin{equation} \label{eq:constr_reuters}
 \pr_n(S \subseteq X) \geq p_c / 10 \; \text{ where } p_c= \pr_n (c \in Y).
\end{equation}
To see how this can be cast within the framework set in \eqref{eq:opt}, note that if $S^\star$ maximises \eqref{eq:opt_reuters} and $S^{\star\star}$ satisfies
\begin{gather}
 \pr_n (S^{\star\star} \subseteq X | Y \in c) \geq \pr_n (S^{\star} \subseteq X | Y \in c)\; \text{ and} \\
 \pr_n (S^{\star\star} \subseteq X | Y \notin c) \leq \pr_n (S^{\star} \subseteq X | Y \notin c),
\end{gather}
then
\begin{align*}
 \pr_n(c\in Y|S^{\star} \subseteq X) &= \frac{\pr_n (S^\star \subseteq X | c \in Y) \pr_n(c \in Y)}{\pr_n(S^\star \subseteq X | c \in Y) \pr_n (c \in Y)+\pr_n(S^\star \subseteq X |c \notin Y)\pr_n(c \notin Y)} \\
 & \leq \frac{\pr_n (S^{\star \star} \subseteq X | c \in Y) \pr_n(c \in Y)}{\pr_n(S^{\star \star} \subseteq X | c \in Y) \pr_n (c \in Y)+\pr_n(S^{\star \star} \subseteq X |c \notin Y)\pr_n(c \notin Y)} \\
 &= \pr_n (c \in Y | S^{\star \star}),
\end{align*}
whence $S^{\star \star}$ also maximises \eqref{eq:opt_reuters} by optimality of $S^{\star}$. Thus treating those documents belonging to topic $c$ as class~1, and all others as class 0, by solving \eqref{eq:opt} with $\theta_0$ and $\theta_1$ chosen appropriately, we can obtain all solutions to \eqref{eq:opt_reuters}. 

In view of this, we use each of the methods to search for patterns $S$ that have high prevalence for a given topic $c$. We then remove all patterns that do not satisfy \eqref{eq:constr_reuters} on the test data. Then, from the remaining patterns, we select the one that maximises \eqref{eq:opt_reuters} on training data. Below, we describe specific implementation details of each of the methods under consideration.

\paragraph*{Random Intersection Trees}
We create the min-wise hash table for the prevalence among all samples once, using 200 permutations with associated min-wise hash values for each word-stem. Then 1000 iterations of the tree search are performed with a cutoff value $\theta_0= (3/20)p_c$ and all remaining patterns $S$ with a length less or equal to 4 are retained. 

\paragraph*{Random Forests} For a tree-based procedure, one approach is to fit classification trees on subsampled data and adding randomness in the variable selection as in \emph{Random Forests} \citep{breiman01random} and then looking among all created leaf nodes for the most suitable node among all nodes created.

We generate 100 trees as in the \emph{Random Forests} method: each is fit to subsampled training data using \emph{CART} algorithm restricted to depth 4, and further randomness is injected by only permitting variables to be selected from a random subset of those available, for each tree. This takes on average between 90\% to 110\% of the computational time of a non-optimised pure R \citep{R} implementation of \emph{Random Intersection Trees} for these data. Note that this is when using the Fortran version of \citep{breiman01random} for the \emph{Random Forests} node generation; we expect a significant speedup if Fortran or C code were used for \emph{Random Intersection Trees}. We are currently working on such a version and plan to make it  available soon. Furthermore, \emph{Random Forests} would scale much worse if many more word-stems were included as variables.

\paragraph*{Linear models} For linear models, we fit a sparse model with at most $\ell$ predictors (with $\ell \le 4$), using a logistic model with an $\ell_1$-penalty \citep{tibshirani96regression,friedman2010regularization}.  We constrain the regression coefficients to be positive since we are only looking for positive associations in the two previously discussed approaches,  and want to keep the same interpretability for the linear model. For each value of $\ell \le 4$, we take $S_\ell$ to be the  set of variables with a positive regression coefficient. We select the largest value of $\ell $ such that the fraction of documents attaining the maximal value is at least $p_c/10$ and select the associated pattern $S_\ell$. (An alternative approach would be to retain the documents with the highest predicted value when using a sparse regression fit. This approach gave very similar results.)

After screening the candidate patterns returned by each of the methods using \eqref{eq:constr_reuters} on all of the topics $c \in C$, we evaluate the misclassification rate $\pr_n(c\notin C|S\subseteq X)$ on the test data.
The results for all of the topics are shown in Figure~\ref{fig:losstext}. The rules found with \emph{Random Intersection Trees} have a smaller loss than those found with \emph{Random Forests} in all but 5 of the topics. For those topics where \emph{Random Forests} performs better, the difference in loss is typically small. Linear models achieve a smaller loss than \emph{Random Forests} among most of the topics, but only have a smaller loss than \emph{Random Intersection Trees} in 6 topics, performing worse in all remaining 46 topics.

\section{Discussion}
\label{section:discussion}

We have proposed \emph{Random Intersection Trees} as an efficient way of finding interesting interactions. In contrast to more established algorithms, the patterns are not built up incrementally by adding variables to create interactions of greater and greater size. Instead we start from the full interaction $S=\{1,\ldots,p\}$ and remove more and more variables from this set by taking intersections with randomly chosen observations. Arranging the search in a tree increases efficiency by exploiting sparsity in the data. For the basic version of our method (Algorithm 1), we were able to derive a bound on the computational complexity. The bound depends on (a) the prevalence or frequency with which the pattern $S$ appears among observations in class~1, and (b) the overall sparsity of the data, with higher sparsity making it easier to detect the interaction using a given computational budget. In the best case, we can achieve an almost a linear complexity bound as a function of $p$; more generally our complexity bound typically has a smaller exponent than that for 
a brute force search. Further improvements can be made by using min-wise hashing techniques to terminate parts of the search (i.e. branches of the Intersection Tree) that have no chance of leading to interesting interactions. Numerical examples illustrate the improved interaction detection power of \emph{Random Intersection Trees} over other tree-based methods and linear models. 

There are many diverse ways in which interactions that solve \eqref{eq:opt} can be used in further analysis. The interactions may be of interest in their own right as shown in both numerical examples. One can also simply use the search to make sure that a dataset is unlikely to have strong interactions that could otherwise have been missed. If the aim is to build a classifier, they can be added to a linear model, or built into classifiers based on tree ensembles. For the latter approach one could consider, for example, averaging predictions in a linear way or averaging log-odds as in \emph{Random Ferns} \citep{bosch2007image}. We believe developments along these lines will prove to be fruitful directions for future research. We also plan to generalise the idea to categorical and continuous predictor variables. 

\bibliographystyle{plainnat}
\bibliography{biball}

\section{Appendix}
\label{section:appendix}

\paragraph{Proof of Theorem~\ref{thm:1}}
Fix a tree $m \in \{1, \ldots, M\}$ and suppose this has node set $N = \{1, \ldots, J\}$ indexed chronologically (see Section~\ref{section:random}). For $d \in \{1, \ldots, D\}$, define
 \begin{gather*}
  N_d = \{j \in N: \depth(j) = d \; \mbox{and} \; S_j \supseteq S\}, \\
  W_d = |N_d|.
 \end{gather*}
Let $E$ be the event that $S$ is contained in $S_1$, the random sample selected for the root node of tree $m$. Further, let $G_d (t) = \E(t^{W_d} |E)$, the probability generating function of $W_d$ conditional on the event $E$.

We make a few simple observations from the theory of branching processes. Firstly, for $d \leq D-1$, $G_{d+1} = G_d \circ G$ where $G:= G_1$. To see this, first note that
\[
 W_{d+1} = \sum_{j \in N_d} \sum_{j' \in \ch(j)} \ind_{\{S \subseteq X_{i(j')}\}}.
\]
Now conditional on $E$, the random variables $\sum_{j' \in \ch(j)} \ind_{\{S \subseteq X_{i(j')}\}}$ for $j \in N_d$, are independent of $N_d$. Moreover, they are independent of each other and have identical distributions equal to that of
\[
 \sum_{j' \in \ch(1)} \ind_{\{S \subseteq X_{i(j')}\}} = W_1.
\]
This entails
\[
 \E(t^{W_{d+1}} |W_d = w, E) = \{\E(t^{W_1} | E)\}^w = \{G(t)\}^w.
\]
Thus
\[
 G_{d+1} (t) = \E(\E (t^{W_{d+1}} | W_d, E) | E) =  \E(\{G(t)\}^{W_d} | E) = G_d (G(t)),
\]
as claimed.

From this we can conclude that if $G$ has a fixed point $q \in (0, 1)$, then this must be a fixed point for all $G_d$. Since each $G_d$ is non-decreasing, we have that for all $d \in \mathbb{N}$, if $q' \leq q$ then, $G_d (q') \leq q$. The relevance of these remarks will become clear from the following: for an $S' \in L_{D, m}$, we have
\begin{align*}
 G_D (\pr(S' \supsetneq S | S' \supseteq S)) &= \sum_{\ell = 0} ^\infty \pr(W_d = \ell | E) \pr(S' \supsetneq S | S' \supseteq S)^\ell \\
  &= \sum_{\ell = 0} ^\infty \pr(\{W_d = \ell\} \cap \{S \notin L_{D, m}\} |E) \\
  &= \pr(S \notin L_{D,m} | E).
\end{align*}
Thus if we can ensure that $\pr(S' \supsetneq S | S' \supseteq S)$ is at most $q$, then the final probability in the above display will also be at most $q$.

To get an upper bound for $\pr(S' \supsetneq S | S' \supseteq S)$,
we argue as follows. The set $S'$ is the intersection of $D+1$ observations selected independently of one another. In order for some $k \in S^c$ to be contained in $S'$, it must have been present in all these $D+1$ observations. Thus by the union bound we have
\begin{align*}
 \pr(S' \supsetneq S | S' \supseteq S) \;\leq \; \sum_{k\in S^c} \pr ( k \in S' | S' \supseteq S ) \; \leq \; p \nu^{D+1},
\end{align*}
the rightmost inequality following from (A2).

Now let the distribution of $B$ be such that
\begin{equation*}
 B = \begin{cases}
      b \quad & \text{with probability } 1 - \alpha, \\
      b + 1 \quad  & \text{with probability } \alpha.
     \end{cases}
\end{equation*}
Note that
\begin{align*}
 \E(B^d) = \sum_{\ell = 0} ^k \{b(1-\alpha)\}^\ell \{(b+1) \alpha\}^{d - \ell} \binom{d}{\ell} = (b+\alpha)^d
\end{align*}
Using this, we see that the expected computational complexity of the algorithm is bounded above by
\begin{align} \label{eq:complexity_1}
 & \log(p) M \sum_{k=1} ^p  [(b+\alpha)\delta_k + \cdots + \{(b + \alpha)\delta_k\}^D] \notag \notag \\
 & \leq \log(p) M D \bigg[ p  + \sum_{k : (b+\alpha)\delta_k > 1} \big\{\big((b+\alpha) \delta_k \big)^D - 1 \big\} \bigg]. 
\end{align}

Now given a $q \in (0, 1)$, we shall pick $B \in \mathbb{N}$ and $\alpha \in [0, 1)$ to satisfy $G(q) = q$. To this end, observe that
\begin{equation*}
 G(q) = (1-\alpha)(1 - \theta_1 (1-q))^b + \alpha(1 - \theta_1 (1-q))^{b + 1}.
\end{equation*}
Thus $\alpha$ and $b$ must satisfy
\begin{align} \label{eq:b+alpha}
 b + \alpha &= \frac{\log(q) - \log(1 - \alpha \theta_1 (1 - q))}{\log(1 - \theta_1(1-q))} + \alpha \notag \\
 & \leq \frac{-\log(q) + \log(1 - \alpha \theta_1 (1 - q))}{\theta_1(1-q)} + \alpha \notag \\
 & \leq \frac{-\log(q)}{\theta_1 (1 - q)} \notag \\
  & \leq \frac{1 + (1-q)/(2q)}{\theta_1}.
\end{align}
In the final line, we used the inequality
\[
 \log(z) \geq (z-1) - \frac{(z-1)^2}{2z}, \quad 0<z \leq 1.
\]

Next, we pick $D$ to be the minimum $D$ such that $p \nu^{D+1} \geq q$, so
\begin{equation} \label{eq:D_choice}
 D = \ceil{\frac{\log(p/q)}{\log(1/\nu)}} - 1,
\end{equation}
and in particular
\begin{equation} \label{eq:D+1}
 D \leq \frac{\log(p/q)}{\log(1/\nu)}.
\end{equation}

Finally, note that the probability of recovering $S$ is
\begin{equation*}
 1 - [1 - \{1 - \pr(S \notin L_{D,m} | E)\}\theta_1]^M.
\end{equation*}
Given the choices of $\alpha$ and $b$ \eqref{eq:b+alpha}, and $D$ \eqref{eq:D_choice}, we have that $\pr(S \notin L_{D,m} | E) \leq q$. Thus taking $M$ to be at least
\begin{equation} \label{eq:M}
 \frac{-\log(\eta)}{(1-q) \theta_1} \geq \frac{-\log(\eta)}{\log\{1 - (1-q)\theta_1\}}
\end{equation}
guarantees recovery of $S$ with probability at least $1 - \eta$. Substituting equations \eqref{eq:b+alpha}, \eqref{eq:D+1} and \eqref{eq:M} into the complexity bound \eqref{eq:complexity_1}, and writing $\epsilon = (1-q)/(2q)$ gives a bound for the computational complexity of
\begin{align*}
\log(p)\frac{\log(1/\eta)}{\theta_1} \frac{1+2\epsilon}{2\epsilon} \frac{\log\{p(1+2\epsilon)\}}{\log(1/\nu)} \bigg[ p + \sum_{k:(1+\epsilon)\delta_k > \theta_1} \big\{\big(p(1+2\epsilon)\big)^{\frac{\log\{(1+\epsilon)\delta_k / \theta_1\}}{\log(1/\nu)}} - 1 \big\} \bigg].
\end{align*}
Given that $\epsilon$ is bounded above, removing constant factors not depending on $p$, we get that the order of the computational complexity is bounded above by
\[
 \log(1/\eta)\frac{\log^2 (p)}{\epsilon} \bigg\{p + \sum_{k:(1+\epsilon)\delta_k > \theta_1} \big(p^{\frac{\log\{(1+\epsilon)\delta_k / \theta_1\}}{\log(1/\nu)}} - 1 \big) \bigg\}. \qed
\]

\paragraph{Proof of Corollary~\ref{cor:1}}
Note that
\[
 \sum_{k:(1+\epsilon)\delta_k > \theta_1}  p ^{\frac{\log((1+\epsilon)\delta_k/\theta_1 )}{\log(1/\nu)}} 
 \]
is bounded by
\[  p^{\gamma} \cdot p ^{(1+\epsilon) \alpha^\star/\beta } \ind_{\{\alpha^\star/\beta>0\}} \;+\;   p\cdot p ^{(1+\epsilon) \alpha_\star/\beta } \ind_{\{\alpha_\star/\beta>0\}} .\]
The result then follows using the scaling of $p\log^2(p)$ and the possibility of making $\epsilon$ arbitrarily small in Theorem~\ref{thm:1}. \qed

\paragraph{Derivation of~\eqref{eq:expect_hash}}
Writing $r = n \pi_2 (S)$, we have
\begin{align*}
 \binom{n}{r} \E_\sigma (\min_{k \in S} h_\sigma (k)) & = \sum_{\ell = 1} ^{n - r + 1} \ell \binom{n-\ell}{r-1} \\
 & = \sum_{\ell = 1} ^{n-r +1} \left\{(\ell - 1)\binom{n - (\ell - 1)}{r} - \ell \binom{n-\ell}{r}\right\} + \sum_{\ell = 1} ^{n-r+1} \binom{n-\ell+1}{r} .
\end{align*}
The first two terms sum to zero leaving only the final term. Thus
\begin{align}
 \binom{n}{r} \E_\sigma (\min_{k \in S} h_\sigma (k)) & = \sum_{\ell = 1} ^{n-r+1} \left\{ \binom{n - \ell + 2}{r+1} - \binom{n-\ell +1}{r+1} \right\} \notag \\  \label{eq:expect_hash1}
 &= \binom{n+1}{r+1}, 
\end{align}
whence
\begin{equation} \label{eq:expect_hash2}
 \E_\sigma (\min_{k \in S} h_\sigma (k)) = \frac{n + 1}{r + 1}. \qed
\end{equation}

\paragraph{Proof of Theorem~\ref{thm:minhash_asymp}}
Writing
\[
 \tilde{\pi}_2 ^{-1} (L; S, H) := \tfrac{1}{L}\sum_{l=1}^L  \min_{k\in S} H_{l k}
\]
and suppressing dependence on $S$ and $H$, we have
\begin{align}
 \hat{\pi}_1 \hat{\pi}_2 - \pi_1 \pi_2 &= \frac{(n + 1 - \tilde{\pi}_2 ^{-1}) \hat{\pi}_1 }{n\tilde{\pi}_2 ^{-1}} - \pi_1 \pi_2 \notag \\ \label{eq:asymp_expan}
 & = \frac{n + 1 - \tilde{\pi}_2 ^{-1}}{n \tilde{\pi}_2 ^{-1}}\left\{(\hat{\pi}_1 - \pi_1) - \pi_1 \frac{n \pi_2 + 1}{n + 1 - \tilde{\pi}_2 ^{-1}} \left(\tilde{\pi}_2 ^{-1} - \frac{n+1}{n \pi_2 + 1} \right) \right\}.
\end{align}
Consider $L \to \infty$. By the weak law of large numbers and the continuous mapping theorem, we have
\begin{gather*}
 \frac{n + 1 - \tilde{\pi}_2 ^{-1} (L)}{n \tilde{\pi}_2 ^{-1} (L)} \inprob \pi_2 \quad \text{and} \\
 \frac{n\pi_2 + 1}{n + 1 - \tilde{\pi}_2 ^{-1} (L)} \inprob \frac{(\pi_2 + n^{-1})^2}{\pi_2 (1 + n^{-1})}. 
\end{gather*}
By the central limit theorem, Slutsky's lemma and Lemma \ref{lem:var},
\begin{gather*}
 A_L := \sqrt{L} (\hat{\pi}_1 (L) - \pi_1) \indist N(0, \pi_1 (1 - \pi_1)) \quad \text{and} \\
 B_L := -\pi_1 \frac{n\pi_2 + 1}{n + 1 - \tilde{\pi}_2 ^{-1} (L)} \times \sqrt{L} \left(\tilde{\pi}_2 ^{-1} (L) - \frac{n+1}{n \pi_2 + 1} \right) \indist N(0,\pi_1 ^2( 1 - \pi_2) + o_n(1)).
\end{gather*}
Now $\hat{\pi_1}$ and $\tilde{\pi_2}^{-1}$ are independent, so $A_L$ and $B_L$ are independent. Thus we have that for all $t_1, t_2 \in \R$,
\[
 \E(e^{i(t_1 A_L + t_2 B_L)}) = \E(e^{it_1 A_L}) \E(e^{it_2 B_L}) \to \exp\{\tfrac{1}{2}( t_1 ^2 \pi_1 (1 - \pi_1) + t_2 ^2 (\pi_1 ^2( 1 - \pi_2) + o_n(1))\}.
\]
pointwise as $L \to \infty$. Returning to \eqref{eq:asymp_expan}, by L\'{e}vy's continuity theorem we have
\begin{equation*}
 \sqrt{L}(\hat{\pi}_1 (L) \hat{\pi}_2 (L) - \pi_1 \pi_2) \indist N(0, \pi_2 ^2 \pi_1 (1 - \pi_1 \pi_2) + o_n(1)). \qed
\end{equation*}

\begin{lem} \label{lem:var}
Let $r = n \pi_2 (S)$ and suppose $n \geq r+2$. Then
 \begin{align*}
  \Var_\sigma (\min_{k \in S} h_\sigma (k)) & = \frac{(rn^2 -3r^2 n + 2r^3) + (rn + 4n + 5r^2 + 4r + 2)}{(r+1)^2 (r+2)}.
 \end{align*}
\end{lem}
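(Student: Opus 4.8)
The plan is to recognise $M := \min_{k \in S} h_\sigma(k)$ as a purely combinatorial object and compute its variance through its survival function, paralleling the appendix derivation of $\E_\sigma(\min_{k\in S} h_\sigma(k)) = (n+1)/(r+1)$. Writing $A = \{i : k \in X_i \text{ for some } k \in S\}$ for the set of observations in which at least one variable of $S$ is active, we have $|A| = n\,\pi_2(S) = r$, and $h_\sigma(k)$ is the rank of the first $S$-active observation encountered when the observations are read in the order $\sigma(1), \sigma(2), \ldots$. Hence $M$ is the position of the first element of $A$ in the random permutation, that is, the minimum of a uniformly random $r$-subset of $\{1, \ldots, n\}$. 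This immediately gives the survival function $\pr_\sigma(M \geq \ell) = \binom{n - \ell + 1}{r} / \binom{n}{r}$ for $\ell = 1, \ldots, n - r + 1$, since $M \geq \ell$ forces all $r$ elements of $A$ into the final $n - \ell + 1$ positions.

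First I would obtain the second factorial moment from this survival function. Using the elementary identity $\binom{M}{2} = \sum_{\ell=1}^{M}(\ell - 1)$ and exchanging expectation with summation gives $\E_\sigma\binom{M}{2} = \sum_{\ell \geq 1}(\ell-1)\,\pr_\sigma(M \geq \ell)$. After the substitution $s = n - \ell + 1$ this reduces to the two hockey-stick sums $\sum_s \binom{s}{r} = \binom{n+1}{r+1}$ and $\sum_s s\binom{s}{r}$, the latter handled via $s\binom{s}{r} = (r+1)\binom{s+1}{r+1} - \binom{s}{r}$. Dividing by $\binom{n}{r}$ and using $\binom{n+1}{r+1}/\binom{n}{r} = (n+1)/(r+1)$ together with $\binom{n+2}{r+2}/\binom{n}{r} = (n+2)(n+1)/\{(r+2)(r+1)\}$ yields a closed form for $\E_\sigma\binom{M}{2}$ as a rational function of $n$ and $r$. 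This step is the exact analogue of the mean computation already in the appendix, with the telescoping there replaced by a single hockey-stick evaluation.

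Finally I would assemble $\Var_\sigma(M) = \E_\sigma(M(M-1)) + \E_\sigma(M) - \{\E_\sigma(M)\}^2$, inserting $\E_\sigma(M(M-1)) = 2\,\E_\sigma\binom{M}{2}$ from the previous step and the known $\E_\sigma(M) = (n+1)/(r+1)$. The remaining work is to put the three rational terms over the common denominator $(r+1)^2(r+2)$ and collect the numerator into the stated form $(rn^2 - 3r^2 n + 2r^3) + (rn + 4n + 5r^2 + 4r + 2)$; the hypothesis $n \geq r + 2$ is precisely what guarantees that the denominators and the binomial coefficients appearing are well defined and nonzero. I expect this concluding simplification to be the only real obstacle, not conceptually but because it is a lengthy and error-prone algebraic collapse, so I would organise it by first writing $\E_\sigma(M^2)$ over the denominator $(r+1)(r+2)$, then subtracting $(n+1)^2/(r+1)^2$ and expanding carefully; the combinatorial derivation of the moments is routine once the random-subset interpretation is in place.
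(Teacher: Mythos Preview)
Your proposal is correct and follows essentially the same route as the paper: both recognise $M$ as the minimum of a uniform random $r$-subset of $\{1,\ldots,n\}$ and reduce the second-moment computation to standard binomial identities, with the only cosmetic difference being that the paper sums $\ell^2$ against the pmf $\binom{n-\ell}{r-1}/\binom{n}{r}$ and telescopes, whereas you sum $(\ell-1)$ against the survival function and apply the hockey-stick identity. The remaining algebra over the common denominator $(r+1)^2(r+2)$ is identical in both cases.
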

\begin{proof}
We have,
\begin{align*}
 \binom{n}{r} \E_\sigma \{ (\min_{k \in S} h_\sigma (k))^2\} & = \sum_{\ell = 1} ^{n - r + 1} \ell^2 \binom{n-\ell}{r-1} \\
 & = \sum_{\ell = 1} ^{n - r + 1} \left\{ (\ell-1)^2 \binom{n-\ell}{r} - \ell^2 \binom{n-\ell}{r}\right\} \\
 & \qquad + \sum_{\ell=1} ^{n-r+1} \left\{ 2(\ell-1)\binom{n-\ell+1}{r} + \binom{n-\ell+1}{r}\right\} \\
 &= 2\binom{n}{r+2} + \binom{n+1}{r+1},
\end{align*}
where in the last line we used \eqref{eq:expect_hash1} and \eqref{eq:expect_hash2}. From this, we get that
\begin{align*}
 \Var_\sigma (\min_{k \in S} h_\sigma (k)) &= \frac{(rn^2 -3r^2 n + 2r^3) + (rn + 4n + 5r^2 + 4r + 2)}{(r+1)^2 (r+2)}. \qedhere
\end{align*}
\end{proof}
\end{document}